\newtheorem{theorem}{Theorem}
\newtheorem{proposition}[theorem]{Proposition}
\renewcommand{\eqref}[1]{\textup{(\ref{#1})}}
\DeclarePairedDelimiter{\norm}{\lVert}{\rVert}
\title{Hierarchical Time Series Forecasting with Robust Reconciliation}
\author{
    Shuhei Aikawa \\
    Department of Industrial Engineering and Economics \\
    Institute of Science Tokyo \\
    \texttt{aikawa.s.a647@m.isct.ac.jp}
    \And
    Aru Suzuki \\
    Department of Industrial Engineering and Economics \\
    Institute of Science Tokyo \\
    \texttt{suzuki.a.3382@m.isct.ac.jp}
    \And
    Kei Yoshitake \\
    Department of Industrial Engineering and Economics \\
    Institute of Science Tokyo \\
    \texttt{yoshitake.k.6312@m.isct.ac.jp}
    \And
    Kanata Teshigawara \\
    Department of Industrial Engineering and Economics \\
    Institute of Science Tokyo \\
    \texttt{teshigawara.k.46c4@m.isct.ac.jp}
    \And
    Akira Iwabuchi \\
    Department of Industrial Engineering and Economics \\
    Institute of Science Tokyo \\
    \texttt{iwabuchi.a.3c54@m.isct.ac.jp}
    \And
    Ken Kobayashi \\
    Department of Industrial Engineering and Economics \\
    Institute of Science Tokyo \\
    \texttt{kobayashi.k@iee.eng.isct.ac.jp}
    \And
    Kazuhide Nakata \\
    Department of Industrial Engineering and Economics \\
    Institute of Science Tokyo \\
    \texttt{nakata.k.ac@m.titech.ac.jp}
}
\begin{document}

\maketitle

\begin{abstract}
This paper focuses on forecasting hierarchical time-series data, where each higher-level observation equals the sum of its corresponding lower-level time series.
In such contexts, the forecast values should be coherent, meaning that the forecast value of each parent series exactly matches the sum of the forecast values of its child series.
Existing hierarchical forecasting methods typically generate base forecasts independently for each series and then apply a reconciliation procedure to adjust them so that the resulting forecast values are coherent across the hierarchy.
These methods generally yield an optimal reconciliation, using a covariance matrix of the forecast errors.
In practice, however, the true covariance matrix is unknown and has to be estimated from finite samples in advance.
This gap between the true and estimated covariance matrix may degrade forecast performance.
To address this issue, we propose a robust optimization framework for hierarchical reconciliation that accounts for uncertainty in the estimated covariance matrix. 
We first introduce an uncertainty set for the estimated covariance matrix and formulate a reconciliation problem that minimizes the worst-case average of weighted squared residuals over this uncertainty set.
We show that our problem can be cast as a semidefinite optimization problem.
Numerical experiments demonstrate that the proposed robust reconciliation method achieved better forecast performance than existing hierarchical forecasting methods, which indicates the effectiveness of integrating uncertainty into the reconciliation process.
\end{abstract}

\section{Introduction}
\label{sec:introduction}

Time-series forecasting is indispensable across diverse fields, including sales planning and inventory management \citep{aviv2003time, ramos2015performance}, energy supply planning \citep{suganthi2012energy, hernandez2014survey}, and economic analysis and stock investment decision-making \citep{krollner2010financial}.
For instance, in the retail sector, accurate sales predictions based on historical data are crucial for optimizing inventory levels and preventing both overstocking and shortages.
Similarly, for electric power companies, forecasting electricity consumption enables efficient facility operations and effective supply-demand balance management.
Moreover, at both the individual and national levels, leveraging forecasts of economic indicators and stock prices can significantly contribute to wealth creation.
Conversely, low forecast accuracy can lead to substantial losses and missed opportunities for individuals, businesses, and society as a whole.
Consequently, extensive research has focused on developing various time-series forecasting methods, and there remains a strong demand for more precise techniques \citep{mahalakshmi2016survey, liu2021forecast, wen2022transformers}.

Many real-world datasets inherently possess hierarchical structures.
Examples include sales data organized by region or demographic statistics categorized by gender or age group, which are commonly recorded across multiple levels of aggregation.
In practice, the appropriate hierarchical level for forecasting depends on the specific application, and this choice can significantly influence prediction outcomes.
Generally, as one descends to lower hierarchical levels, the data becomes more granular but also more susceptible to individual variations and noise, leading to increased uncertainty.
This often means that aggregated data at higher levels tends to be more stable and achieve greater forecast accuracy \citep{grunfeld1960aggregation}.
However, higher-level aggregated data can obscure fine-grained patterns and individual variation factors.
Therefore, it has also been suggested that utilizing detailed data from lower levels, if appropriately modeled, can potentially yield superior forecast accuracy \citep{orcutt1968data, edwards1969should}.

Given this context, time-series forecasting methods that explicitly account for hierarchical structures have garnered increasing attention \citep{athanasopoulos2009hierarchical, hyndman2011optimal, wickramasuriya2019optimal, shiratori2020, hyndman2021forecasting}.
These approaches aim to adjust forecasts across both lower and higher levels to ensure coherence when aggregating forecast values within the hierarchy.
Such methods are expected to enhance forecast accuracy compared to conventional time-series forecasting based on a single level.

Despite these advancements, existing hierarchical time-series forecasting methods face certain challenges.
Traditional approaches typically aim to minimize the expected forecast error at a given target time point, which necessitates a covariance matrix of the forecast errors.
This matrix is commonly estimated from the residuals between observed and forecast values.
However, if underlying data trends shift or the forecasting model is inaccurate, discrepancies can arise between the estimated and true covariance matrices.
Thus, the estimated covariance matrix itself carries inherent uncertainty, which must be addressed.
Prior work by \citet{moller2024optimal} focused on this issue, decomposing covariance matrix estimation into parameter estimation errors and stochastic irreducible errors to quantify uncertainty and improve forecast accuracy.
Nevertheless, even with their method, the true covariance matrix cannot be perfectly determined, leaving room for further improvements in forecast accuracy.

To address the uncertainty inherent in estimators, robust optimization has emerged as a powerful technique.
This methodology is designed to yield solutions that remain effective even when the underlying data fluctuates within a defined uncertain range.
Specifically, it involves establishing a range for uncertain parameters or data and then seeking an optimal solution that performs best under the worst-case scenario within that range.
Since its inception by \citet{ben1998robust}, robust optimization has been extensively researched in both theoretical and applied domains \citep{bertsimas2011theory}.
Notably, models that incorporate covariance matrix uncertainty have been developed and applied to various problems, such as portfolio optimization \citep{lobo2000worst, halldorsson2003interior, tutuncu2004robust}.

In this paper, we propose a novel method that frames hierarchical time-series forecasting as a robust optimization problem.
Our approach aims to minimize the average of forecast residuals over an observation period under the worst-case scenario within an uncertainty set for the covariance matrix of forecast errors.
We demonstrate through duality that this robust optimization problem can be formulated as a semidefinite optimization problem, which is theoretically solvable efficiently.
Furthermore, we present numerical experiments on five real-world datasets, showing that our proposed method achieves more accurate forecasts than existing hierarchical time-series forecasting techniques.

\section{Hierarchical time-series forecasting}
\label{sec:hierarchical-time-series-forecasting}

\subsection{Notation}
\label{subsec:notation}

This section defines the notation used throughout the paper, which is consistent with prior studies on hierarchical time-series forecasting \citep{athanasopoulos2009hierarchical, hyndman2011optimal, wickramasuriya2019optimal, panagiotelis2021forecast, hyndman2021forecasting}.

A hierarchical structure is defined by a series of nested levels.
Level 0 is the fully aggregated series.
Level 1 consists of the series obtained by disaggregating the Level 0 series, and Level 2 contains the series that further disaggregate each Level 1 series.
This process continues until the bottom level, denoted as Level $K$, where its series can no longer be disaggregated.

Let $y_X^{\left(t\right)}\in\mathbb{R}$ denote the observation of a series $X$ at time $t$.
The label $X$ is a series of labels representing the indices of each level.
For example, a series $X$ that belongs to series $i$ at Level 1, series $j$ at Level 2, and series $k$ at Level 3 can be denoted by $ijk$.
The series at Level 0 is simply written as $y^{\left(t\right)}$, without a series name $X$.
A key property of hierarchical data is that, at any given time point, the value of a series at a specific level equals the sum of the values of the series nested directly below it:
\[y^{\left(t\right)}=\sum_iy_i^{\left(t\right)},~y_i^{\left(t\right)}=\sum_jy_{ij}^{\left(t\right)},~y_{ij}^{\left(t\right)}=\sum_ky_{ijk}^{\left(t\right)}.\]

To simplify the notation of the hierarchical structure, a matrix and vector expression is often used.
Let $n$ be the total number of series and $m$ be the number of bottom-level series, which satisfy $n>m$.
We denote the vector of all series observations at time $t$ as $\bm{y}^{\left(t\right)}\in\mathbb{R}^n$ and the vector of bottom-level observations as $\bm{b}^{\left(t\right)}\in\mathbb{R}^m$.
With the summing matrix $\bm{S}\in\mathbb{R}^{n\times m}$ that dictates the way in which the bottom-level series aggregate, the hierarchical structure can be written as:
\begin{equation}
    \bm{y}^{\left(t\right)}=\bm{S}\bm{b}^{\left(t\right)}.
    \label{eq:htsf-s}
\end{equation}
When \Cref{eq:htsf-s} holds for the values of all series and the bottom-level series at each time $t$, it is said that the hierarchy is satisfied.

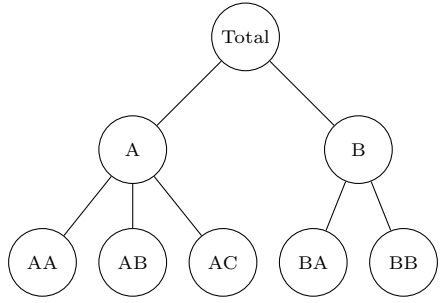
\begin{figure}[htbp]
    \centering
    \begin{tikzpicture}[
        level 1/.style={sibling distance=30mm},
        level 2/.style={sibling distance=12mm},
        every node/.style={
            circle,
            draw,
            minimum size=9mm,
            inner sep=0mm,
            font=\scriptsize
        }
    ]
        \node {$\mathrm{Total}$}
        child { node {$\mathrm{A}$}
            child { node {$\mathrm{AA}$} }
            child { node {$\mathrm{AB}$} }
            child { node {$\mathrm{AC}$} }
        }
        child { node {$\mathrm{B}$}
            child { node {$\mathrm{BA}$} }
            child { node {$\mathrm{BB}$} }
        };
    \end{tikzpicture}
    \caption{An example of a hierarchical structure}
    \label{fig:hierarchical-structure}
\end{figure}
\Cref{fig:hierarchical-structure} provides a simple example of a hierarchical structure.
In this case, $K=2$, $m=5$, $n=8$, and the following aggregation relationship holds:
\[y_\mathrm{A}^{\left(t\right)}=y_\mathrm{AA}^{\left(t\right)}+y_\mathrm{AB}^{\left(t\right)}+y_\mathrm{AC}^{\left(t\right)},~y_\mathrm{B}^{\left(t\right)}=y_\mathrm{BA}^{\left(t\right)}+y_\mathrm{BB}^{\left(t\right)},\]
\[y^{\left(t\right)}=y_\mathrm{A}^{\left(t\right)}+y_\mathrm{B}^{\left(t\right)}=y_\mathrm{AA}^{\left(t\right)}+y_\mathrm{AB}^{\left(t\right)}+y_\mathrm{AC}^{\left(t\right)}+y_\mathrm{BA}^{\left(t\right)}+y_\mathrm{BB}^{\left(t\right)}.\]
The corresponding matrix and vector representation is defined by:
\[\bm{y}^{\left(t\right)}=\left[y^{\left(t\right)}~y_\mathrm{A}^{\left(t\right)}~y_\mathrm{B}^{\left(t\right)}~y_\mathrm{AA}^{\left(t\right)}~y_\mathrm{AB}^{\left(t\right)}~y_\mathrm{AC}^{\left(t\right)}~y_\mathrm{BA}^{\left(t\right)}~y_\mathrm{BB}^{\left(t\right)}\right]^\top,\]
\[\bm{b}^{\left(t\right)}=\left[y_\mathrm{AA}^{\left(t\right)}~y_\mathrm{AB}^{\left(t\right)}~y_\mathrm{AC}^{\left(t\right)}~y_\mathrm{BA}^{\left(t\right)}~y_\mathrm{BB}^{\left(t\right)}\right]^\top,\]
\[\bm{S}=\begin{bmatrix}
    1&1&1&1&1\\
    1&1&1&0&0\\
    0&0&0&1&1\\
    1&0&0&0&0\\
    0&1&0&0&0\\
    0&0&1&0&0\\
    0&0&0&1&0\\
    0&0&0&0&1
\end{bmatrix}.\]
With these definitions, the hierarchical structure is fully captured by \Cref{eq:htsf-s}.

\subsection{Reconciliation methods}
\label{subsec:reconciliation methods}

Hierarchical time-series forecasting is a process that adjusts, or ``reconciles,'' forecast values to ensure they satisfy the hierarchical aggregation constraints. We assume that observations for each series are available for an observation period $t=1,\ldots,T$, and the objective is to forecast values for the forecast period $\tau=T+1,\ldots,T+T^\prime$.

Forecasting, ignoring any aggregation constraints, is called a base forecast.
Let $\hat{\bm{y}}^{(\tau)}\in\mathbb{R}^n$ denote the vector of base forecasts for all series at time $\tau$.
A forecast that satisfies the hierarchical aggregation constraints is called a coherent forecast.
Let $\tilde{\bm{y}}^{(\tau)}\in\mathbb{R}^n$ denote the vector of coherent forecasts at time $\tau$.
To transform base forecasts into coherent forecasts, hierarchical forecasting methods estimate a reconciliation matrix $\bm{P}\in\mathbb{R}^{m\times n}$ that maps the base forecasts to the bottom level.
All hierarchical forecasting approaches can be expressed in the general form:
\begin{equation}
  \tilde{\bm{y}}^{(\tau)}=\bm{S}\bm{P}\hat{\bm{y}}^{(\tau)},
  \label{eq:htsf-p}
\end{equation}
where $\bm{S}$ is the summing matrix defined in \Cref{eq:htsf-s}.

The bottom-up and top-down approaches are two traditional reconciliation methods.
We again use the example from \Cref{fig:hierarchical-structure} to illustrate these concepts.
In the bottom-up approach, coherent forecasts are derived by summing the base forecasts of the bottom-level series.
This corresponds to the reconciliation matrix:
\[\bm{P}=\begin{bmatrix}
    0&0&0&1&0&0&0&0\\
    0&0&0&0&1&0&0&0\\
    0&0&0&0&0&1&0&0\\
    0&0&0&0&0&0&1&0\\
    0&0&0&0&0&0&0&1
\end{bmatrix}.\]
Conversely, in the top-down approach, coherent forecasts are obtained by disaggregating the base forecast of the top-level series.
If $p_X$ is the proportion that allocates the total values to each bottom-level series $X$, the reconciliation matrix becomes:
\[\bm{P}=\begin{bmatrix}
    p_{\mathrm{AA}}&0&0&0&0&0&0&0\\
    p_{\mathrm{AB}}&0&0&0&0&0&0&0\\
    p_{\mathrm{AC}}&0&0&0&0&0&0&0\\
    p_{\mathrm{BA}}&0&0&0&0&0&0&0\\
    p_{\mathrm{BB}}&0&0&0&0&0&0&0
\end{bmatrix}.\]
A common way to determine these proportions is based on the average historical proportions of the data.

Because the bottom-up and top-down methods utilize base forecasts from only a single level of aggregation, they rely on limited information.
To overcome this, subsequent research has proposed methods that use base forecasts from all series to estimate a reconciliation matrix, thus producing more comprehensive, coherent forecasts.

\citet{hyndman2011optimal} proposed the generalized least squares (GLS) reconciliation, a regression-based approach.
In this method, the reconciliation matrix is chosen to minimize errors between the base and coherent forecasts.
Specifically, consider the regression model for base forecasts
\[\hat{\bm{y}}^{(\tau)}=\bm{S}\bm{\beta}^{(\tau)}+\bm{\varepsilon}^{(\tau)}.\]
Let $\bm{\beta}^{(\tau)}=\mathbb{E}\left[\bm{b}^{(\tau)}\mid \bm{y}^{\left(1\right)},\ldots,\bm{y}^{\left(T\right)}\right]\in\mathbb{R}^m$ be the expectation of base forecasts at the bottom level, and let the error term $\bm{\varepsilon}^{(\tau)}$ have zero mean with covariance matrix $\bm{\Sigma}^{(\tau)}=\operatorname{Var}\left(\bm{\varepsilon}^{(\tau)}\mid\bm{y}^{\left(1\right)},\ldots,\bm{y}^{\left(T\right)}\right)\in\mathbb{R}^{n\times n}$.
If $\bm{\Sigma}^{(\tau)}$ were known, the minimum variance unbiased estimator of $\bm{\beta}^{(\tau)}$ would be the GLS estimator
\[\hat{\bm{\beta}}^{(\tau)}=\left(\bm{S}^\top\bm{\Sigma}^{(\tau)\dagger}\bm{S}\right)^{-1}\bm{S}^\top\bm{\Sigma}^{(\tau)\dagger}\hat{\bm{y}}^{(\tau)},\]
where $\bm{\Sigma}^{(\tau)\dagger}$ is the generalized inverse of $\bm{\Sigma}^{(\tau)}$.
Comparing this expression with \Cref{eq:htsf-p} gives
\[\bm{P}=\left(\bm{S}^\top\bm{\Sigma}^{(\tau)\dagger}\bm{S}\right)^{-1}\bm{S}^\top\bm{\Sigma}^{(\tau)\dagger}.\]
In practice, however, the covariance matrix $\bm{\Sigma}^{(\tau)}$ is unknown and cannot be estimated.
It represents the covariance of the reconciliation errors at time $\tau$, but the errors are not observable until coherent forecasts have already been produced.
\citet{hyndman2011optimal} showed that, under the assumption that the errors themselves satisfy the aggregation constraints, $\bm{\Sigma}^{(\tau)}$ can be replaced with the $n$-dimensional identity matrix $\bm{I}_n$.
This replacement is equivalent to computing the ordinary least squares (OLS) estimator instead of the GLS estimator.

\citet{wickramasuriya2019optimal} proposed the minimum-trace (MinT) reconciliation, which determines the reconciliation matrix $\bm P$ by minimizing the total variance of errors between the observed values and the coherent forecasts. 
Let 
\[\bm{V}^{(\tau)}=\operatorname{Var}\left(\bm{y}^{(\tau)}-\tilde{\bm{y}}^{(\tau)}\mid\bm{y}^{\left(1\right)},\ldots,\bm{y}^{\left(T\right)}\right)\in\mathbb{R}^{n\times n}\]
denote the covariance matrix of those errors, and define the covariance matrix of the errors between observed values and base forecasts as
\[\bm{W}^{(\tau)}=\operatorname{Var}\left(\bm{y}^{(\tau)}-\hat{\bm{y}}^{(\tau)}\mid\bm{y}^{\left(1\right)},\ldots,\bm{y}^{\left(T\right)}\right)\in\mathbb{R}^{n\times n}.\]
Under the linear transformation in \Cref{eq:htsf-p}, the error covariance matrix of the coherent forecasts is expressed as $\bm{V}^{(\tau)}=\bm{S}\bm{P}\bm{W}^{(\tau)}\bm{P}^\top\bm{S}^\top$.
Consequently, the optimal reconciliation matrix that minimizes the total error variance $\operatorname{tr}\left(\bm{V}^{(\tau)}\right)$ is given by
\begin{equation}\label{eq:matrix_p} 
 \bm{P}=\left(\bm{S}^\top\left(\bm{W}^{(\tau)}\right)^{-1}\bm{S}\right)^{-1}\bm{S}^\top\left(\bm{W}^{(\tau)}\right)^{-1}.
\end{equation}
A notable property of this reconciliation is that the resulting coherent forecasts $\tilde{\bm y}^{(\tau)}$ are guaranteed to be at least as accurate as the base forecasts $\hat{\bm y}^{(\tau)}$ in terms of the weighted squared error using $\left(\bm{W}^{(\tau)}\right)^{-1}$:
\[\left(\bm{y}^{(\tau)}-\tilde{\bm{y}}^{(\tau)}\right)^\top\left(\bm{W}^{(\tau)}\right)^{-1}\left(\bm{y}^{(\tau)}-\tilde{\bm{y}}^{(\tau)}\right)\leq\left(\bm{y}^{(\tau)}-\hat{\bm{y}}^{(\tau)}\right)^\top\left(\bm{W}^{(\tau)}\right)^{-1}\left(\bm{y}^{(\tau)}-\hat{\bm{y}}^{(\tau)}\right).\]

As noted by \citet{panagiotelis2021forecast}, when the base forecast $\hat{\bm{y}}^{(\tau)}$ is unbiased, the trace minimization in MinT is equivalent to minimizing the expected squared Euclidean norm of the errors between the observed values and the coherent forecasts:
\begin{align}
    \min_{\bm{P}}\quad\mathbb{E}[
    \norm{\bm{y}^{(\tau)}-\bm{S}\bm{P}\hat{\bm{y}}^{(\tau)}}^2],
\end{align}
where $\norm{\cdot}$ is the Euclidean norm. 
Furthermore, they provided a key insight that extends this Euclidean norm to a more general weighted norm. 
Specifically, they showed that if we know the true covariance matrix $\bm W^{(\tau)}$, the solution in \Cref{eq:matrix_p} is invariant to the choice of the weight matrix $\bm M$ in the following generalized optimization problem: 
\begin{align}
\min_{\bm{P}} \quad \mathbb{E}\left[ (\bm{y}^{(\tau)} - \bm{S}\bm{P}\hat{\bm{y}}^{(\tau)})^\top \bm{M} (\bm{y}^{(\tau)} - \bm{S}\bm{P}\hat{\bm{y}}^{(\tau)}) \right],
\label{opt:mint_m}
\end{align}
where $\bm M$ is any positive definite matrix. 
This property implies that the specific weighting of errors does not alter the optimal reconciliation matrix $\bm P$.

\section{Robust reconciliation}
\label{sec:robust-reconciliation}

The invariance property of MinT is a key advantage. 
At the same time, applying this to real-world data often involves practical considerations. 
For example, base forecasts may contain some empirical biases.  
Furthermore, because the true error distribution is unknown, we typically replace the expectation in \Cref{opt:mint_m} with the average of residuals over a specific observation period $t=1, \dots, T$. 
In such cases, the choice of the weight matrix $\bm{M}$  can influence the performance of reconciliation.

In MinT framework, it is natural to use $(\bm{W}^{(\tau)})^{-1}$ as the weight matrix $\bm{M}$ to account for the different scales of errors across series. 
Although the true covariance matrix $\bm W^{(\tau)}$ is unknown, it is common to estimate it using the observations and base forecasts from the observation period $t=1,\ldots,T$. 
This estimate is usually assumed to be constant for all forecast periods, so we consider using an estimated covariance matrix $\bm W$ as an alternative to $\bm W^{(\tau)}$ for all $\tau = T+1, \ldots, T+T'$.

However, this single estimate $\bm W$ may not always be accurate due to limited data or changes in the underlying error structure. 
To address this uncertainty, we propose a robust optimization framework. 
We treat $\bm W^{-1}$ as an uncertain parameter and introduce an uncertainty set. 
By formulating a problem that minimizes the worst-case error over this uncertainty set, we aim to ensure stable forecast performance even when the estimate of $\bm W$ is not perfect.

\subsection{Formulation}
\label{subsec:formulation}

As we focus on the empirical average over $T$ time steps from the observation period, the minimization problem of weighted squared residuals is formulated as follows:
\begin{align}
    \min_{\bm{P}}\quad
    &\frac{1}{T}\sum_{t=1}^{T}\left(\bm{y}^{\left(t\right)}-\bm{S}\bm{P}\hat{\bm{y}}^{\left(t\right)}\right)^\top\bm{W}^{-1}\left(\bm{y}^{\left(t\right)}-\bm{S}\bm{P}\hat{\bm{y}}^{\left(t\right)}\right),
    \label{opt:empirical}
\end{align}
where $\hat{\bm y}^{(t)}$ is the base forecast at time $t$ in the observation period.
To incorporate robustness against the uncertainty in estimating $\bm W$, we introduce an uncertainty set $\mathcal{M}\subseteq \mathbb{R}^{n\times n}$. 
Then, we consider the following robust optimization problem:
\begin{align*}
    \min_{\bm{P}}\max_{\bm{M}\in\mathcal{M}}\quad
    &\sum_{t=1}^{T}\left(\bm{y}^{\left(t\right)}-\bm{S}\bm{P}\hat{\bm{y}}^{\left(t\right)}\right)^\top\bm{M}\left(\bm{y}^{\left(t\right)}-\bm{S}\bm{P}\hat{\bm{y}}^{\left(t\right)}\right),
\end{align*}
where we omit the coefficient $1/T$ from the objective function, since it does not affect the optimal solution.

In robust optimization problems with uncertainty in the covariance matrix, a box uncertainty set is often used \citep{lobo2000worst, halldorsson2003interior}.
The box uncertainty set places upper and lower bounds on the inverse of covariance matrix, so that
\[\mathcal{M}=\left\{\bm{M}\mid\underline{\bm{M}}\leq\bm{M}\leq\overline{\bm{M}},~\bm{M}\succeq\bm{O}\right\},\]
where $\overline{\bm{M}},\underline{\bm{M}}\in\mathbb{R}^{n\times n}$ are the upper and lower bounds of $\bm{M}$, respectively, and the inequalities $\underline{\bm{M}}\leq\bm{M}\leq\overline{\bm{M}}$ represent element-wise inequalities, meaning $\underline{M}_{ij}\leq M_{ij}\leq\overline{M}_{ij}$ for all $i,j=1,\ldots,n$. 
$\bm{M}\succeq\bm{O}$ denotes that $\bm{M}$ is positive semidefinite.
Therefore, the proposed method determines the reconciliation matrix by solving the following robust optimization problem:
\begin{equation}
    \begin{aligned}
        \min_{\bm{P}}\max_{\bm{M}}\quad
        &\sum_{t=1}^{T}\left(\bm{y}^{\left(t\right)}-\bm{S}\bm{P}\hat{\bm{y}}^{\left(t\right)}\right)^\top\bm{M}\left(\bm{y}^{\left(t\right)}-\bm{S}\bm{P}\hat{\bm{y}}^{\left(t\right)}\right)\\
        \text{s.t.}\quad
        &\underline{\bm{M}}\leq\bm{M}\leq\overline{\bm{M}},\\
        &\bm{M}\succeq\bm{O}.
    \end{aligned}
    \label{opt:robust-minmax}
\end{equation}

\subsection{Equivalent reformulation}
\label{subsec:equivalent reformulation}

We show that the min-max problem~\eqref{opt:robust-minmax} is equivalently reformulated as a semidefinite optimization problem following the approach of \citet{lobo2000worst} and \citet{halldorsson2003interior}.
Let $\bm{I}$ and $\bm{O}$ be the identity matrix and zero matrix of appropriate dimensions, respectively.
For two symmetric matrices $\bm{A}$ and $\bm{B}$ of the same size, $\bm{A}\bullet\bm{B}$ denotes the standard inner product of the two matrices, defined as $\bm{A}\bullet\bm{B}\coloneqq \operatorname{tr}\left(\bm{A}^\top\bm{B}\right)$, which is the sum of the element-wise product of $\bm{A}$ and $\bm{B}$.
\begin{proposition}
    Assume that there exists a positive definite matrix $\bm{M}^\prime$ satisfying $\underline{\bm{M}}<\bm{M}^\prime<\overline{\bm{M}}$.
    Then, the problem~\eqref{opt:robust-minmax} can be equivalently reformulated as the following semidefinite optimization problem:
    \begin{equation}
        \begin{aligned}
            \min_{\overline{\bm{X}},\underline{\bm{X}},\bm{E},\bm{P}}\quad
            &\overline{\bm{M}}\bullet\overline{\bm{X}}-\underline{\bm{M}} \bullet\underline{\bm{X}}\\
            \mathrm{s.t.}\quad
            &\begin{bmatrix}
                \overline{\bm{X}}-\underline{\bm{X}}&\bm{E}\\
                \bm{E}^\top&\bm{I}
            \end{bmatrix}\succeq\bm{O},\\
            &\bm{E}=\left[\bm{y}^{\left(1\right)}-\bm{S}\bm{P}\hat{\bm{y}}^{\left(1\right)}, \dots , \bm{y}^{\left(T\right)}-\bm{S}\bm{P}\hat{\bm{y}}^{\left(T\right)}
            \right],\\
            &\overline{\bm{X}},~\underline{\bm{X}}\geq\bm{O},
        \end{aligned}
        \label{opt:robust-min}
    \end{equation}
    where $\overline{\bm{X}}$ and $\underline{\bm{X}}$ are $n$-dimensional symmetric matrix variables, and $\bm{E}$ is an $n\times T$-dimensional matrix variable whose columns are the error vectors between the observed values and coherent forecasts over the observation period.
\end{proposition}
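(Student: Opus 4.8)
The plan is to regard $\bm{P}$ as fixed, dualize the inner maximization over $\bm{W}$ using semidefinite (conic) duality, and then re-absorb the outer minimization over $\bm{P}$ into the resulting minimization problem; the equivalence is in the sense of equal optimal values, with an optimal $\bm{P}$ of one problem being optimal for the other.

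First I would put the objective into trace form. Setting $\bm{e}_t=\bm{y}^{\left(t\right)}-\bm{S}\bm{P}\hat{\bm{y}}^{\left(t\right)}$ and collecting these columns into the matrix $\bm{E}=\left[\bm{e}_1~\cdots~\bm{e}_T\right]$ appearing in \eqref{opt:robust-min}, each summand is $\bm{e}_t^\top\bm{W}\bm{e}_t=\operatorname{tr}(\bm{W}\bm{e}_t\bm{e}_t^\top)$, so $\sum_{t=1}^{T}\bm{e}_t^\top\bm{W}\bm{e}_t=\bm{W}\bullet(\bm{E}\bm{E}^\top)$. Hence, for fixed $\bm{P}$ (equivalently, fixed $\bm{E}$), the inner problem is the \textbf{linear} semidefinite program $\max_{\bm{W}}\{\bm{W}\bullet(\bm{E}\bm{E}^\top):\underline{\bm{W}}\le\bm{W}\le\overline{\bm{W}},~\bm{W}\succeq\bm{O}\}$, whose feasible set is compact since the box constraints bound $\bm{W}$ entrywise, so its value is finite and attained. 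I would then form its Lagrangian dual, attaching entrywise-nonnegative multipliers $\underline{\bm{X}}$ to $\bm{W}-\underline{\bm{W}}\ge\bm{O}$ and $\overline{\bm{X}}$ to $\overline{\bm{W}}-\bm{W}\ge\bm{O}$, and a positive-semidefinite multiplier $\bm{Z}$ to $\bm{W}\succeq\bm{O}$. Collecting the terms linear in $\bm{W}$ forces $\bm{E}\bm{E}^\top+\underline{\bm{X}}-\overline{\bm{X}}+\bm{Z}=\bm{O}$; eliminating $\bm{Z}=\overline{\bm{X}}-\underline{\bm{X}}-\bm{E}\bm{E}^\top$ and imposing $\bm{Z}\succeq\bm{O}$ yields the constraint $\overline{\bm{X}}-\underline{\bm{X}}\succeq\bm{E}\bm{E}^\top$, while the remaining constant part of the Lagrangian is $\overline{\bm{W}}\bullet\overline{\bm{X}}-\underline{\bm{W}}\bullet\underline{\bm{X}}$. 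Thus the dual of the inner maximization is $\min\{\overline{\bm{W}}\bullet\overline{\bm{X}}-\underline{\bm{W}}\bullet\underline{\bm{X}}:\overline{\bm{X}}-\underline{\bm{X}}\succeq\bm{E}\bm{E}^\top,~\overline{\bm{X}},\underline{\bm{X}}\ge\bm{O}\}$, and since $\bm{I}\succ\bm{O}$ the Schur-complement identity rewrites $\overline{\bm{X}}-\underline{\bm{X}}-\bm{E}\bm{I}^{-1}\bm{E}^\top\succeq\bm{O}$ as the linear matrix inequality $\left[\begin{smallmatrix}\overline{\bm{X}}-\underline{\bm{X}}&\bm{E}\\\bm{E}^\top&\bm{I}\end{smallmatrix}\right]\succeq\bm{O}$ appearing in \eqref{opt:robust-min}.

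The step I expect to be the main obstacle is ruling out a duality gap, so that the inner maximum equals the dual minimum for \emph{every} fixed $\bm{P}$. This is exactly where the hypothesis enters: the matrix $\bm{W}^\prime$ with $\underline{\bm{W}}<\bm{W}^\prime<\overline{\bm{W}}$ and $\bm{W}^\prime\succ\bm{O}$ is a Slater point, strictly feasible for both the entrywise box constraints and the semidefinite constraint of the inner program, so strong duality for the conic program holds and the two optimal values coincide and are attained. Substituting the dual minimization for the inner maximization, the overall problem becomes a joint minimization over $(\overline{\bm{X}},\underline{\bm{X}},\bm{E},\bm{P})$: the objective $\overline{\bm{W}}\bullet\overline{\bm{X}}-\underline{\bm{W}}\bullet\underline{\bm{X}}$ is linear, the defining relation $\bm{E}=\left[\bm{y}^{\left(1\right)}-\bm{S}\bm{P}\hat{\bm{y}}^{\left(1\right)}~\cdots~\bm{y}^{\left(T\right)}-\bm{S}\bm{P}\hat{\bm{y}}^{\left(T\right)}\right]$ is affine in $\bm{P}$, and the remaining constraints are the linear matrix inequality together with $\overline{\bm{X}},\underline{\bm{X}}\ge\bm{O}$. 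Since minimizing over $\bm{P}$ and then over the dual variables is the same as minimizing jointly, this is precisely the semidefinite program \eqref{opt:robust-min}, and tracing the values through shows that its optimum equals that of \eqref{opt:robust-minmax} and that an optimal $\bm{P}$ for \eqref{opt:robust-min} attains the min-max value.
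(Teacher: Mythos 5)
Your proposal is correct and follows essentially the same route as the paper: fix $\bm{P}$, dualize the inner maximization over $\bm{W}$ to obtain the linear objective $\overline{\bm{W}}\bullet\overline{\bm{X}}-\underline{\bm{W}}\bullet\underline{\bm{X}}$ with constraint $\overline{\bm{X}}-\underline{\bm{X}}\succeq\bm{E}\bm{E}^\top$, rewrite this via the Schur complement as the linear matrix inequality, invoke strong duality using the strictly feasible $\bm{W}^\prime$ from the hypothesis, and merge the outer minimization over $\bm{P}$ into a joint SDP. The only cosmetic difference is that the paper additionally exhibits an explicit strictly feasible dual point $\left(\overline{\bm{X}}^\prime,\underline{\bm{X}}^\prime\right)$, whereas you close the duality gap via primal strict feasibility together with boundedness of the inner problem, which is equally valid.
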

\begin{proof}
    Fix $\bm{P}$ and consider the inner maximization of the problem~\eqref{opt:robust-minmax}:
    \begin{equation}
        \begin{aligned}
            \max_{\bm{M}}\quad
            &\sum_{t=1}^{T}\left(\bm{y}^{\left(t\right)}-\bm{S}\bm{P}\hat{\bm{y}}^{\left(t\right)}\right)^\top\bm{M}\left(\bm{y}^{\left(t\right)}-\bm{S}\bm{P}\hat{\bm{y}}^{\left(t\right)}\right)\\
            \text{s.t.}\quad
            &\underline{\bm{M}}\leq\bm{M}\leq\overline{\bm{M}},\\
            &\bm{M}\succeq\bm{O}.
        \end{aligned}
        \label{opt:inner}
    \end{equation}
    Its dual problem is formulated as
    \begin{equation}
        \begin{aligned}
            \min_{\overline{\bm{X}},\underline{\bm{X}}}\quad
            &\overline{\bm{M}}\bullet\overline{\bm{X}}-\underline{\bm{M}} \bullet\underline{\bm{X}}\\
            \text{s.t.}\quad
            &\overline{\bm{X}}-\underline{\bm{X}}-\sum_{t=1}^{T}\left(\bm{y}^{\left(t\right)}-\bm{S}\bm{P}\hat{\bm{y}}^{\left(t\right)}\right)\left(\bm{y}^{\left(t\right)}-\bm{S}\bm{P}\hat{\bm{y}}^{\left(t\right)}\right)^\top
            \succeq\bm{O},\\
            &\overline{\bm{X}},~\underline{\bm{X}}\geq\bm{O},
        \end{aligned}
        \label{opt:inner-dual}
    \end{equation}
    where $\overline{\bm{X}}$, $\underline{\bm{X}}$ are dual variables. 
    Let us define the matrix $\bm{E}=\left[\bm{y}^{\left(1\right)}-\bm{S}\bm{P}\hat{\bm{y}}^{\left(1\right)}, \dots, \bm{y}^{\left(T\right)}-\bm{S}\bm{P}\hat{\bm{y}}^{\left(T\right)}\right]$. 
    Then, by the Schur complement, the semidefinite constraint in the problem~\eqref{opt:inner-dual} is equivalently transformed as follows:
    \begin{align}
        \overline{\bm{X}}-\underline{\bm{X}}-\sum_{t=1}^{T}\left(\bm{y}^{\left(t\right)}-\bm{S}\bm{P}\hat{\bm{y}}^{\left(t\right)}\right)\left(\bm{y}^{\left(t\right)}-\bm{S}\bm{P}\hat{\bm{y}}^{\left(t\right)}\right)^\top\succeq\bm{O} 
        &\iff\overline{\bm{X}}-\underline{\bm{X}}- \bm{E}\bm{E}^\top\succeq\bm{O}\notag\\
        &\iff\begin{bmatrix}
            \overline{\bm{X}}-\underline{\bm{X}}&\bm{E}\\
            \bm{E}^\top&\bm{I}
        \end{bmatrix}\succeq\bm{O}.
        \label{eq:schur}
    \end{align}
    Therefore, we can replace the semidefinite constraint of the problem~\eqref{opt:inner-dual} with \Cref{eq:schur}, which yields the following equivalent dual problem:
    \begin{equation}
        \begin{aligned}
            \min_{\overline{\bm{X}},\underline{\bm{X}}}\quad
            &\overline{\bm{M}}\bullet\overline{\bm{X}}-\underline{\bm{M}}\bullet\underline{\bm{X}}\\
            \text{s.t.}\quad
            &\begin{bmatrix}
                \overline{\bm{X}}-\underline{\bm{X}}&\bm{E}\\
                \bm{E}^\top&\bm{I}
            \end{bmatrix}\succeq\bm{O},\\
            &\overline{\bm{X}},~\underline{\bm{X}}\geq\bm{O}.
        \end{aligned}
        \label{opt:inner-dual-reformulated}
    \end{equation}
    Finally, we show the dual problem~\eqref{opt:inner-dual-reformulated} has a strictly feasible solution, which implies that the strong duality between the problems~\eqref{opt:inner} and \eqref{opt:inner-dual-reformulated} holds. 
    Let $\bm{J}\in\mathbb{R}^{n\times n}$ be a matrix whose elements are all one. 
    For the fixed $\bm P$, we consider the following solution: 
    \begin{equation*}
        \overline{\bm{X}}^\prime=\bm{E}\bm{E}^\top+\gamma\bm{J}+\bm{I},\quad\underline{\bm{X}}^\prime=\gamma\bm{J},
    \end{equation*}
    where $\gamma>0$ is a sufficiently large constant such that $\bm{E}\bm{E}^\top+\gamma\bm{J}>\bm{O}$. 
    Since we take $\gamma>0$ sufficiently large, $\overline{\bm{X}}^\prime,\underline{\bm{X}}^\prime>\bm{O}$. 
    Focusing on the semidefinite constraint of the problem~\eqref{opt:inner-dual-reformulated}, we have
    \[\begin{bmatrix}
        \overline{\bm{X}}^\prime-\underline{\bm{X}}^\prime&\bm{E}\\
        \bm{E}^\top&\bm{I} 
    \end{bmatrix}=\begin{bmatrix}
        \bm{E}\bm{E}^\top+\bm{I}&\bm{E}\\
        \bm{E}^\top&\bm{I}
    \end{bmatrix}\succ\bm{O},\]
    thus $\left(\overline{\bm{X}}^\prime,\underline{\bm{X}}^\prime\right)$ is strictly feasible.
    Since we assume that the primal problem~\eqref{opt:inner} is strictly feasible, the strong duality theorem holds and the optimal values of the two problems~\eqref{opt:inner} and \eqref{opt:inner-dual-reformulated} are equal \citep{vandenberghe1996semidefinite}. 
    As we took $\bm{P}$ arbitrarily, the above argument holds for all $\bm{P}$. 
    Therefore, the min-max problem~\eqref{opt:robust-minmax} is equivalently reformulated as the semidefinite optimization problem~\eqref{opt:robust-min}.
\end{proof}

Hence, the approach of the proposed method is to solve the semidefinite optimization problem~\eqref{opt:robust-min} and use the optimal solution for $\bm{P}$ as the reconciliation matrix for hierarchical time-series forecasts.

\subsection{Uncertainty set}
\label{subsec:uncertainty set}

In order to deal with the robust optimization problem described in the previous sections, it is necessary to determine the uncertainty set, i.e., $\overline{\bm{M}}$ and $\underline{\bm{M}}$ in advance.
We set the upper and lower bounds of the uncertainty set from the observation period data using bootstrap with reference to \citet{bertsimas2018data}.

We summarize the method for determining the uncertainty set in \Cref{alg:uncertainty}.
In the first step, we calculate a parameter $\lambda$ for the shrinkage approach, similar to the existing hierarchical time-series forecasting described in \Cref{subsec:reconciliation methods}.
This shrinkage approach is applied to the inverse of covariance matrix estimated by unbiased variance from the observation period data.
In the next step, we estimate the inverse of covariance matrix of each sample obtained by sampling the data of the observation period.
For sampling, we select the same number of time points as the observation period $T$ with replacement.
Then, using the shrinkage intensity parameter $\lambda$ obtained in the first step, the shrinkage approach is applied to the sampled inverse covariance matrix.
This sampling is repeated $N_B\geq 1$ times to obtain $N_B$ inverse covariance matrices.
In the last step, determine upper and lower bounds from each element of the sampled inverse covariance matrices.
Let $0<\alpha\leq 1$, then set the width of the uncertainty set to be $\alpha$ times the width of the maximum and minimum sample values of the inverse of covariance matrix.
\begin{algorithm}[htbp]
    \caption{Designing uncertainty set}
    \label{alg:uncertainty}
    \begin{algorithmic}
        \REQUIRE $n,T,\{\bm{y}^{(t)}\}_{t=1}^{T},\{\hat{\bm{y}}^{(t)}\}_{t=1}^{T},N_B,\alpha$
        \ENSURE $\overline{\bm{M}},\underline{\bm{M}}$
        \STATE $\bm{M}\leftarrow$~estimate inverse of covariance matrix from $\{\bm{y}^{(t)}\}_{t=1}^{T}$ and $\{\hat{\bm{y}}^{(t)}\}_{t=1}^{T}$
        \STATE $\lambda\leftarrow$~calculate shrinkage intensity parameter of $\bm{M}$
        \FOR {$s=1,\ldots,N_B$}
        \STATE $\{\bm{y}^{(t^\prime)}\}_{t^\prime=1}^{T},\{\hat{\bm{y}}^{(t^\prime)}\}_{t^\prime=1}^{T}\leftarrow$~sample $T$ data points from $\{\bm{y}^{(t)}\}_{t=1}^{T}$ and $\{\hat{\bm{y}}^{(t)}\}_{t=1}^{T}$ with replacement
        \STATE $\bm{M}_{(s)}\leftarrow$~estimate inverse of covariance matrix from $\{\bm{y}^{(t^\prime)}\}_{t^\prime=1}^{T}$ and $\{\hat{\bm{y}}^{(t^\prime)}\}_{t^\prime=1}^{T}$
        \STATE $\bm{M}_{(s)}\leftarrow$~update $\bm{M}_{(s)}$ by shrinkage estimator with $\lambda$
        \ENDFOR
        \FOR {$i=1,\ldots,n,~j=1,\ldots,n$}
        \STATE $\overline{M}_{ij}\leftarrow$~calculate $100\cdot(1+\alpha)/2$-percentile of $\{M_{(s)ij}\}_{s=1}^{N_B}$
        \STATE $\underline{M}_{ij}\leftarrow$~calculate $100\cdot(1-\alpha)/2$-percentile of $\{M_{(s)ij}\}_{s=1}^{N_B}$
        \ENDFOR
        \RETURN $\overline{\bm{M}},\underline{\bm{M}}$
    \end{algorithmic}
\end{algorithm}

\section{Numerical experiments}
\label{sec:numerical-experiments}

To verify the effectiveness of the proposed method, we compared its forecast accuracy with that of existing hierarchical time-series forecasting methods across multiple real-world datasets.
Code used in these experiments is available at \url{https://github.com/isct-nakatalab/hierarchical-tsf-robust-reconciliation}.

\subsection{Datasets}
\label{subsec:datasets}

The numerical experiments used four datasets from prior studies.
An overview of each dataset is provided below, including the number of hierarchical levels and series, as well as the lengths of the observation and forecast periods.
These four datasets allowed us to assess the proposed method's performance under various real-world conditions, including hierarchies with different scales of levels and series.

The first dataset is the Australian births data \citep{hyndman2021forecasting}.
It records the number of births in Australia every month from January 1975 to December 2022.
As summarized in \Cref{tab:dataset-birth}, the hierarchy consisted of a single disaggregation level ($K=1$) with nine bottom-level series ($m=9$) and ten series in total ($n=10$).
Level 1, the bottom level, disaggregated the national series into nine states and territories: ACT, AUS, NSW, NT, QLD, SA, TAS, VIC, and WA.
For this experiment, the first 516 months (January 1975 to December 2017) served as the observation period, while the following 60 months (January 2018 to November 2022) comprised the forecast period.
\begin{table}[htbp]
\centering
    \caption{Hierarchy for Australian births data}
    \label{tab:dataset-birth}
    \begin{tabular}{ccc}
    \toprule
    Level & Number of series & Total series per level \\
    \midrule
    total & 1 & 1 \\
    state & 9 & 9 \\
    \bottomrule
    \end{tabular}
\end{table}

The second dataset is the Australian tourism data \citep{athanasopoulos2009hierarchical}, which records the number of domestic travelers every quarter from Q1 1998 to Q4 2017.
As \Cref{tab:dataset-tourism} shows, the hierarchy was structured with $K=2$, $m=27$, and $n=35$.
Level 1 disaggregated the national total into seven states: NSW, NT, QLD, SA, TAS, VIC, and WA.
Note that this state grouping differed from the one used for the births dataset, as we followed prior work.
Level 2, the bottom level, further broke down each state into finer geographic zones.
Here, the observation period consisted of the first 68 quarters (Q1 1998 to Q4 2014), and the forecast period covered the subsequent 12 quarters (Q1 2015 to Q4 2017).
\begin{table}[htbp]
\centering
    \caption{Hierarchy for Australian tourism data}
    \label{tab:dataset-tourism}
    \begin{tabular}{ccc}
    \toprule
    Level & Number of series & Total series per level \\
    \midrule
    total & 1 & 1 \\
    state & 7 & 7 \\
    zone  & 6-2-4-4-3-5-3 & 27 \\
    \bottomrule
    \end{tabular}
\end{table}

The third dataset is the U.S. Walmart sales data \citep{mancuso2021machine}.
It tracks weekly sales from January 3, 2011, to May 29, 2016.
\Cref{tab:dataset-walmart} shows the hierarchy as $K=2$, $m=10$, and $n=14$.
Level 1 split the national total into three states: CA, TX, and WI.
Level 2, the bottom level, further subdivided each state into its constituent stores: four in CA, three in TX, and three in WI.
In this experiment, we used the first 261 weeks (January 3, 2011, to January 3, 2016) as the observation period and the next 21 weeks (January 4, 2016, to May 29, 2016) for the forecast period.
\begin{table}[htbp]
\centering
    \caption{Hierarchy for Walmart sales data}
    \label{tab:dataset-walmart}
    \begin{tabular}{ccc}
    \toprule
    Level & Number of series & Total series per level \\
    \midrule
    total & 1 & 1 \\
    state & 3 & 3 \\
    store  & 4-3-3 & 10 \\
    \bottomrule
    \end{tabular}
\end{table}

The fourth dataset is the Swiss electricity demand data \citep{nespoli2020hierarchical}, recording electricity supply every ten minutes from January 13, 2018, to January 19, 2019.
As outlined in \Cref{tab:dataset-electricity}, the hierarchy was $K=3$, $m=24$, and $n=31$.
Level 1 disaggregated the grid into two synthetic meter aggregations: S1 and S2.
Level 2 further subdivided each aggregation into two synthetic sub-aggregations: S11 and S12 for S1, and S21 and S22 for S2.
Level 3, the bottom level, then separated each sub-aggregation into six individual meters.
After converting the data to a daily unit, we used the first 353 days (January 13, 2018, to December 31, 2018) for the observation period and the subsequent 19 days (January 1, 2019, to January 19, 2019) for the forecast period.
\begin{table}[htbp]
\centering
    \caption{Hierarchy for Swiss electricity demand data}
    \label{tab:dataset-electricity}
    \begin{tabular}{ccc}
    \toprule
    Level & Number of series & Total series per level \\
    \midrule
    grid & 1 & 1 \\
    agg. & 2 & 2 \\
    sub-agg. & 2-2 & 4 \\
    meter & 6-6-6-6 & 24 \\
    \bottomrule
    \end{tabular}
\end{table}

\subsection{Experimental setup}
\label{subsec:experimental-setup}

This subsection details the benchmark methods, our proposed robust methods, and evaluation metrics.

Hierarchical forecasting first requires a set of base forecasts (\textbf{Base}), which do not account for the hierarchical structure.
We generated these base forecasts using Prophet \citep{taylor2018prophet}, an open-source time-series library from Meta, with its default settings.
As a preliminary experiment, we also performed base forecasts using other time-series forecasting methods such as ARIMA and LightGBM, but Prophet showed the best results.
Our comparative methods included the bottom-up (\textbf{BU}), top-down (\textbf{TD}), GLS reconciliation (\textbf{GLS}), and MinT reconciliation (\textbf{MinT}) approaches introduced in \Cref{sec:hierarchical-time-series-forecasting}.

For our proposed method (\textbf{Robust}), the optimization problem~\eqref{opt:robust-min} was solved using the MOSEK solver \citep{mosek}.
Moreover, we needed to set two parameters for the bootstrap to design the uncertainty set, the number of sampling $N_B$ and the width of the uncertainty set $\alpha$.
In these experiments, $N_B$ was fixed at $5000$ and $\alpha$ was determined by validation from among the candidates specified in advance.
For validation, we divided the observation period data into train and validation data in a ratio of 9:1, and used the $\alpha$ with the smallest RMSE in the validation data.
The candidates for $\alpha$ were $0.5$, $0.6$, $0.7$, $0.8$, $0.9$, and $1.0$.

As evaluation metrics, we used the mean absolute error (MAE) and the root mean squared error (RMSE) for each series. For a given series $X$, these are defined as:
\[\mathrm{MAE}=\frac{1}{T^\prime}\sum_{\tau=T+1}^{T+T^\prime}\left|y_X^{(\tau)}-\tilde{y}_X^{(\tau)}\right|,~\mathrm{RMSE}=\sqrt{\frac{1}{T^\prime}\sum_{\tau=T+1}^{T+T^\prime}\left(y_X^{(\tau)}-\tilde{y}_X^{(\tau)}\right)^2}\]
Note that RMSE gives a harsh evaluation when the prediction deviates significantly from the MAE.

\subsection{Results and discussion}
\label{subsec:results-and-discussion}

The experimental results for all datasets are summarized in \Cref{tab:result-birth}--\ref{tab:result-electricity}.
To make the results easier to understand, we calculated the ratio of MAE and RMSE for each hierarchical time-series forecasting method when the MAE and RMSE of the base forecasts are set to $1$, and then calculated the mean and standard deviation for series included in the same hierarchical level.
We defined these metrics as relative MAE and relative RMSE, respectively.
The original MAE and RMSE of all series are listed in the appendix.
In the tables, the format is ``mean $\pm$ standard deviation.''
Since the top level of hierarchical structure (Level 0) includes only one series, the standard deviation for the series is not shown.
The underlined values indicate the best forecast accuracy in terms of the mean for the corresponding hierarchical level.

For the Australian births dataset, the validation to decide a parameter for bootstrap resulted in $\alpha=1.0$.
The proposed method achieved the highest accuracy in most cases.
In the top level, our proposed method was the only approach that achieved higher accuracy than the base forecast.
In the bottom level, considering the standard deviation, some series showed significant improvement in forecast accuracy with the proposed method.
\begin{table}[htbp]
    \centering
    \caption{Forecast accuracy for Australian births data}
    \label{tab:result-birth}
    \begin{subtable}{\linewidth}
        \centering
        \caption{Relative MAE}
        \label{tab:result-mae-birth}
        \begin{tabular}{lrrrrrr}
            \toprule
             & Base & BU & TD & GLS & MinT & Robust \\
            \midrule
            total & $1.000$ & $1.034$ & $1.000$ & $1.003$ & $0.999$ & $\underline{0.923}$ \\
            state & $1.000$ & $1.000\pm 0.000$ & $1.513\pm 1.302$ & $0.955\pm 0.136$ & $0.978\pm 0.043$ & $\underline{0.944\pm 0.167}$ \\
            \bottomrule
        \end{tabular}
        \vspace{1ex}
    \end{subtable}
    \begin{subtable}{\linewidth}
        \centering
        \caption{Relative RMSE}
        \label{tab:result-rmse-birth}
        \begin{tabular}{lrrrrrr}
            \toprule
             & Base & BU & TD & GLS & MinT & Robust \\
            \midrule
            total & $1.000$ & $1.014$ & $1.000$ & $1.001$ & $1.000$ & $\underline{0.968}$ \\
            state & $1.000$ & $1.000\pm 0.000$ & $1.297\pm 0.831$ & $\underline{0.970\pm 0.090}$ & $0.989\pm 0.026$ & $0.973\pm 0.103$ \\
            \bottomrule
        \end{tabular}
    \end{subtable}
\end{table}

In the Australian tourism dataset, the bootstrap parameter was determined via validation to be $\alpha=0.5$.
Our proposed method yielded lower forecast accuracy than the existing GLS and MinT reconciliation methods, except for the top level.
This underperformance seems to be due to a slight discrepancy between the estimated covariance matrix and the true one in the forecast period, which reduced the benefit of explicitly accounting for covariance uncertainty.
\begin{table}[htbp]
    \centering
    \caption{Forecast accuracy for Australian tourism data}
    \label{tab:result-tourism}
    \begin{subtable}{\linewidth}
        \centering
        \caption{Relative MAE}
        \label{tab:result-mae-tourism}
        \begin{tabular}{lrrrrrr}
            \toprule
             & Base & BU & TD & GLS & MinT & Robust \\
            \midrule
            total & $1.000$ & $2.315$ & $1.000$ & $1.125$ & $0.663$ & $\underline{0.518}$ \\
            state & $1.000$ & $1.277\pm 0.247$ & $0.968\pm 0.400$ & $0.635\pm 0.181$ & $\underline{0.573\pm 0.135}$ & $0.793\pm 0.460$ \\
            zone & $1.000$ & $1.000\pm 0.000$ & $0.925\pm 0.380$ & $0.623\pm 0.166$ & $\underline{0.568\pm 0.158}$ & $0.843\pm 0.387$ \\
            \bottomrule
        \end{tabular}
        \vspace{1ex}
    \end{subtable}
    \begin{subtable}{\linewidth}
        \centering
        \caption{Relative RMSE}
        \label{tab:result-rmse-tourism}
        \begin{tabular}{lrrrrrr}
            \toprule
             & Base & BU & TD & GLS & MinT & Robust \\
            \midrule
            total & $1.000$ & $2.132$ & $1.000$ & $1.105$ & $0.698$ & $\underline{0.519}$ \\
            state & $1.000$ & $1.227\pm 0.193$ & $0.989\pm 0.388$ & $0.680\pm 0.185$ & $\underline{0.617\pm 0.135}$ & $0.810\pm 0.428$ \\
            zone & $1.000$ & $1.000\pm 0.000$ & $0.959\pm 0.367$ & $0.657\pm 0.154$ & $\underline{0.606\pm 0.141}$ & $0.844\pm 0.339$ \\
            \bottomrule
        \end{tabular}
    \end{subtable}
\end{table}

For the Walmart sales, the validation procedure for selecting the bootstrap parameter yielded $\alpha=0.9$.
Our method achieved the highest forecast accuracy for all levels, with substantial performance gains over existing methods.
\begin{table}[htbp]
    \centering
    \caption{Forecast accuracy for Walmart sales data}
    \label{tab:result-walmart}
    \begin{subtable}{\linewidth}
        \centering
        \caption{Relative MAE}
        \label{tab:result-mae-walmart}
        \begin{tabular}{lrrrrrr}
            \toprule
             & Base & BU & TD & GLS & MinT & Robust \\
            \midrule
            total & $1.000$ & $1.135$ & $1.000$ & $1.011$ & $0.966$ & $\underline{0.810}$ \\
            state & $1.000$ & $1.123\pm 0.117$ & $1.263\pm 0.986$ & $0.999\pm 0.029$ & $0.956\pm 0.049$ & $\underline{0.801\pm 0.036}$ \\
            store & $1.000$ & $1.000\pm 0.000$ & $1.618\pm 1.001$ & $0.882\pm 0.069$ & $0.868\pm 0.076$ & $\underline{0.741\pm 0.117}$ \\
            \bottomrule
        \end{tabular}
        \vspace{1ex}
    \end{subtable}
    \begin{subtable}{\linewidth}
        \centering
        \caption{Relative RMSE}
        \label{tab:result-rmse-walmart}
        \begin{tabular}{lrrrrrr}
            \toprule
             & Base & BU & TD & GLS & MinT & Robust \\
            \midrule
            total & $1.000$ & $1.129$ & $1.000$ & $1.011$ & $0.967$ & $\underline{0.817}$ \\
            state & $1.000$ & $1.118\pm 0.116$ & $1.241\pm 0.948$ & $0.998\pm 0.030$ & $0.955\pm 0.050$ & $\underline{0.807\pm 0.038}$ \\
            store & $1.000$ & $1.000\pm 0.000$ & $1.580\pm 0.945$ & $0.885\pm 0.067$ & $0.872\pm 0.070$ & $\underline{0.751\pm 0.111}$ \\
            \bottomrule
        \end{tabular}
    \end{subtable}
\end{table}

On the Swiss electricity demand datasets, as determined by validation, the bootstrap parameter was set to $\alpha=0.7$.
Our proposed method achieved the highest forecast accuracy at the upper levels, but the accuracy deteriorated at the bottom level compared to the base forecast. 
In other words, the improvement at the upper levels came at the expense of accuracy at the bottom level. 
As with the Australian tourism data, the discrepancy between the true and estimated covariance matrices may have been small, limiting the overall impact of robustification. 
\begin{table}[htbp]
    \centering
    \caption{Forecast accuracy for Swiss electricity demand data}
    \label{tab:result-electricity}
    \begin{subtable}{\linewidth}
        \centering
        \caption{Relative MAE}
        \label{tab:result-mae-electricity}
        \begin{tabular}{lrrrrrr}
            \toprule
             & Base & BU & TD & GLS & MinT & Robust \\
            \midrule
            grid & $1.000$ & $1.041$ & $1.000$ & $1.022$ & $1.110$ & $\underline{0.888}$ \\
            agg. & $1.000$ & $1.045\pm 0.034$ & $0.983\pm 0.422$ & $0.983\pm 0.020$ & $0.995\pm 0.017$ & $\underline{0.808\pm 0.277}$ \\
            sub-agg. & $1.000$ & $1.072\pm 0.071$ & $1.390\pm 0.549$ & $0.999\pm 0.010$ & $1.014\pm 0.099$ & $\underline{0.867\pm 0.232}$ \\
            meter & $1.000$ & $1.000\pm 0.000$ & $1.810\pm 0.921$ & $\underline{0.987\pm 0.102}$ & $1.023\pm 0.177$ & $1.036\pm 0.275$ \\
            \bottomrule
        \end{tabular}
        \vspace{1ex}
    \end{subtable}
    \begin{subtable}{\linewidth}
        \centering
        \caption{Relative RMSE}
        \label{tab:result-rmse-electricity}
        \begin{tabular}{lrrrrrr}
            \toprule
             & Base & BU & TD & GLS & MinT & Robust \\
            \midrule
            grid & $1.000$ & $1.022$ & $1.000$ & $1.012$ & $1.050$ & $\underline{0.946}$ \\
            agg. & $1.000$ & $1.013\pm 0.045$ & $0.970\pm 0.213$ & $0.986\pm 0.008$ & $1.005\pm 0.023$ & $\underline{0.895\pm 0.171}$ \\
            sub-agg. & $1.000$ & $1.032\pm 0.058$ & $1.170\pm 0.359$ & $0.995\pm 0.003$ & $1.025\pm 0.071$ & $\underline{0.929\pm 0.158}$ \\
            meter & $1.000$ & $1.000\pm 0.000$ & $1.717\pm 0.781$ & $\underline{0.981\pm 0.086}$ & $1.021\pm 0.141$ & $1.013\pm 0.227$ \\
            \bottomrule
        \end{tabular}
    \end{subtable}
\end{table}

The numerical experiments on these four datasets demonstrated the effectiveness of our proposed hierarchical time-series forecasting method.
Across most target series and hierarchical levels, the proposed approach achieved higher forecast accuracy than existing techniques.
However, in a few cases, the proposed method's prediction performance was slightly inferior to that of the GLS and MinT reconciliation methods.

As discussed in the Australian tourism data results, when the discrepancy between the estimated and true covariance matrices is small, incorporating the uncertainty set may lead to overly conservative forecasts, resulting in lower predictive performance. 
This is further supported by the parameter selection during validation: a larger $\alpha$ was chosen for the Australian births and Walmart sales datasets, where our method was effective. 
Conversely, a smaller $\alpha$ was selected for the Australian tourism and Swiss electricity demand datasets, where the method was less effective. 
This suggests that for these latter datasets, the estimated covariance matrix closely approximated the true covariance matrix, reducing the need to account for its uncertainty in the reconciliation process.

\section{Conclusion}
\label{sec:conclusion}

In this paper, we proposed a robust hierarchical time-series forecasting method.
This approach introduces an uncertainty set for the inverse of covariance matrix of forecast errors and minimizes the forecast error between the observation values and the coherent forecasts.
Through numerical experiments, we demonstrated that the proposed method often provides more accurate forecasts than existing hierarchical time-series forecasting methods, although its performance can vary across different datasets.

The limitations of our method are twofold: its accuracy may not always surpass existing methods, and its scalability is limited due to the computational demands of the optimization problem.
The first issue is as stated in the discussion of the experimental results.
It arises when the discrepancy between the true and estimated covariance matrices is small, in which case the robust approach offers little advantage.
The second limitation concerns the computational time required to obtain a reconciliation matrix, as our method relies on solving a semidefinite optimization problem.
The size of the optimization problem depends on the total number of series and the length of observation periods.
While a solution was achievable in tens of seconds for the dataset sizes used in our experiments, this approach would not be practical for very large-scale predictions.

\subsubsection*{Acknowledgments}

This study was conducted as a part of the Data Analysis Competition hosted by the Joint Association Study Group of Management Science.
The authors would like to thank the organizers and Research and Innovation Co., Ltd.

\bibliography{references}
\bibliographystyle{unsrtnat}

\appendix
\section{Full results of numerical experiments}

This section reports the original MAE and RMSE for all series from the numerical experiments that could not be included in \Cref{subsec:results-and-discussion}.
\Cref{tab:result-mae-birth-full} and \Cref{tab:result-rmse-birth-full} present the MAE and RMSE for the Australian births data, respectively.
Similarly, \Cref{tab:result-mae-tourism-full} and \Cref{tab:result-rmse-tourism-full} report the forecast accuracy for the Australian tourism data, \Cref{tab:result-mae-walmart-full} and \Cref{tab:result-rmse-walmart-full} for the Walmart sales data, and \Cref{tab:result-mae-electricity-full} and \Cref{tab:result-rmse-electricity-full} for the Swiss electricity demand data.
Within each table, series separated by horizontal lines correspond to the same hierarchical level.
\begin{table}[htbp]
    \centering
    \caption{MAE for Australian births data}
    \label{tab:result-mae-birth-full}
    \begin{tabular}{lrrrrrr}
        \toprule
         & Base & BU & TD & GLS & MinT & Robust \\
        \midrule
        total & $3697.93$ & $3822.66$ & $3697.93$ & $3710.04$ & $3695.48$ & $\underline{3413.01}$ \\
        \midrule
        ACT & $79.34$ & $79.34$ & $\underline{23.95}$ & $65.71$ & $77.18$ & $71.65$ \\
        AUS & $1863.94$ & $1863.94$ & $1848.74$ & $1851.83$ & $1819.48$ & $\underline{1706.96}$ \\
        NSW & $433.36$ & $433.36$ & $934.11$ & $423.74$ & $410.68$ & $\underline{372.41}$ \\
        NT & $41.86$ & $41.86$ & $43.48$ & $\underline{30.60}$ & $39.88$ & $36.31$ \\
        QLD & $222.39$ & $222.39$ & $231.03$ & $217.53$ & $217.92$ & $\underline{202.75}$ \\
        SA & $107.33$ & $107.33$ & $328.02$ & $97.16$ & $100.02$ & $\underline{83.49}$ \\
        TAS & $\underline{40.31}$ & $\underline{40.31}$ & $163.39$ & $49.53$ & $43.70$ & $55.08$ \\
        VIC & $711.71$ & $711.71$ & $\underline{430.98}$ & $699.45$ & $700.14$ & $675.65$ \\
        WA & $479.92$ & $479.92$ & $\underline{180.03}$ & $466.29$ & $468.86$ & $451.56$ \\
        \bottomrule
    \end{tabular}
\end{table}
\begin{table}[htbp]
    \centering
    \caption{RMSE for Australian births data}
    \label{tab:result-rmse-birth-full}
    \begin{tabular}{lrrrrrr}
        \toprule
         & Base & BU & TD & GLS & MinT & Robust \\
        \midrule
        total & $6419.88$ & $6507.75$ & $6419.88$ & $6428.59$ & $6418.09$ & $\underline{6213.25}$ \\
        \midrule
        ACT & $93.99$ & $93.99$ & $\underline{44.91}$ & $81.93$ & $92.04$ & $87.08$ \\
        AUS & $3220.69$ & $3220.69$ & $3209.85$ & $3211.97$ & $3188.33$ & $\underline{3106.97}$ \\
        NSW & $782.22$ & $782.22$ & $1165.31$ & $774.65$ & $764.60$ & $\underline{736.15}$ \\
        NT & $55.47$ & $55.47$ & $57.62$ & $\underline{45.46}$ & $53.67$ & $50.50$ \\
        QLD & $452.41$ & $452.41$ & $462.21$ & $447.36$ & $447.76$ & $\underline{433.83}$ \\
        SA & $166.40$ & $166.40$ & $354.25$ & $157.74$ & $160.18$ & $\underline{146.77}$ \\
        TAS & $\underline{54.89}$ & $\underline{54.89}$ & $169.39$ & $62.80$ & $57.77$ & $67.82$ \\
        VIC & $1216.24$ & $1216.24$ & $\underline{1004.19}$ & $1207.09$ & $1207.60$ & $1190.46$ \\
        WA & $626.98$ & $626.98$ & $\underline{383.23}$ & $615.56$ & $617.71$ & $603.22$ \\
        \bottomrule
    \end{tabular}
\end{table}
\begin{table}[htbp]
    \centering
    \caption{MAE for Australian tourism data}
    \label{tab:result-mae-tourism-full}
    \begin{tabular}{lrrrrrr}
        \toprule
         & Base & BU & TD & GLS & MinT & Robust \\
        \midrule
        total & $1507.73$ & $3490.86$ & $1507.73$ & $1696.30$ & $999.98$ & $\underline{780.57}$ \\
        \midrule
        NSW & $706.38$ & $1056.55$ & $\underline{336.42}$ & $594.78$ & $445.01$ & $678.46$ \\
        NT & $122.88$ & $122.99$ & $151.28$ & $\underline{57.50}$ & $82.99$ & $71.12$ \\
        QLD & $565.88$ & $626.18$ & $503.91$ & $427.08$ & $\underline{206.32}$ & $219.97$ \\
        SA & $213.29$ & $278.99$ & $90.94$ & $82.64$ & $83.73$ & $\underline{78.47}$ \\
        TAS & $132.18$ & $132.50$ & $158.77$ & $67.01$ & $84.52$ & $\underline{62.70}$ \\
        VIC & $509.36$ & $828.63$ & $532.14$ & $416.27$ & $\underline{347.01}$ & $640.93$ \\
        WA & $331.08$ & $462.09$ & $498.96$ & $222.41$ & $\underline{207.11}$ & $503.88$ \\
        \midrule
        NSW\_ACT & $131.54$ & $131.54$ & $68.63$ & $\underline{63.37}$ & $68.05$ & $102.13$ \\
        NSW\_Nth & $139.01$ & $139.01$ & $92.79$ & $75.93$ & $82.49$ & $\underline{57.30}$ \\
        NSW\_Sth & $131.20$ & $131.20$ & $115.03$ & $\underline{77.85}$ & $94.87$ & $165.99$ \\
        NSW\_Metro & $313.69$ & $313.69$ & $161.47$ & $241.31$ & $\underline{142.86}$ & $323.77$ \\
        NSW\_Nth\_Coast & $237.56$ & $237.56$ & $\underline{119.50}$ & $160.60$ & $130.42$ & $190.16$ \\
        NSW\_Sth\_Coast & $118.41$ & $118.41$ & $136.72$ & $53.19$ & $\underline{44.83}$ & $110.12$ \\
        NT\_Central & $72.85$ & $72.85$ & $84.78$ & $37.56$ & $51.82$ & $\underline{32.97}$ \\
        NT\_Nth\_Coast & $50.14$ & $50.14$ & $67.35$ & $\underline{30.65}$ & $31.72$ & $76.22$ \\
        QLD\_Metro & $360.78$ & $360.78$ & $209.51$ & $318.68$ & $197.90$ & $\underline{173.06}$ \\
        QLD\_Central\_Coast & $53.20$ & $53.20$ & $69.45$ & $\underline{33.78}$ & $38.48$ & $60.19$ \\
        QLD\_Inland & $150.29$ & $150.29$ & $172.18$ & $107.54$ & $\underline{64.82}$ & $84.05$ \\
        QLD\_Nth\_Coast & $80.10$ & $80.10$ & $140.75$ & $\underline{58.20}$ & $63.47$ & $128.42$ \\
        SA\_Metro & $127.55$ & $127.55$ & $\underline{34.11}$ & $76.69$ & $36.13$ & $38.15$ \\
        SA\_Inland & $78.95$ & $78.95$ & $\underline{30.53}$ & $34.33$ & $32.78$ & $38.62$ \\
        SA\_West\_Coast & $27.14$ & $27.14$ & $24.60$ & $31.01$ & $\underline{16.42}$ & $22.17$ \\
        SA\_Sth\_Coast & $52.22$ & $52.22$ & $70.85$ & $\underline{37.05}$ & $40.02$ & $57.76$ \\
        TAS\_Nth\_East & $50.76$ & $50.76$ & $61.78$ & $\underline{29.15}$ & $32.34$ & $39.28$ \\
        TAS\_Sth & $53.34$ & $53.34$ & $70.71$ & $\underline{39.37}$ & $41.03$ & $46.34$ \\
        TAS\_Nth\_West & $39.50$ & $39.50$ & $33.94$ & $23.07$ & $25.23$ & $\underline{21.32}$ \\
        VIC\_Nth\_West & $100.44$ & $100.44$ & $64.15$ & $53.47$ & $\underline{53.19}$ & $59.12$ \\
        VIC\_Nth\_East & $209.01$ & $209.01$ & $\underline{86.61}$ & $127.61$ & $98.84$ & $87.13$ \\
        VIC\_Metro & $335.23$ & $335.23$ & $339.66$ & $250.59$ & $\underline{176.29}$ & $335.11$ \\
        VIC\_East\_Coast & $113.59$ & $113.59$ & $126.28$ & $\underline{73.29}$ & $79.66$ & $154.80$ \\
        VIC\_West\_Coast & $85.57$ & $85.57$ & $93.60$ & $\underline{39.71}$ & $43.26$ & $59.43$ \\
        WA\_West\_Coast & $237.14$ & $237.14$ & $326.91$ & $\underline{171.87}$ & $199.24$ & $392.71$ \\
        WA\_Sth & $96.10$ & $96.10$ & $56.24$ & $\underline{30.70}$ & $30.76$ & $70.90$ \\
        WA\_Nth & $130.13$ & $130.13$ & $115.82$ & $50.24$ & $\underline{35.39}$ & $57.09$ \\
        \bottomrule
    \end{tabular}
\end{table}
\begin{table}[htbp]
    \centering
    \caption{RMSE for Australian tourism data}
    \label{tab:result-rmse-tourism-full}
    \begin{tabular}{lrrrrrr}
        \toprule
         & Base & BU & TD & GLS & MinT & Robust \\
        \midrule
        total & $1726.07$ & $3680.00$ & $1726.07$ & $1906.48$ & $1204.56$ & $\underline{895.21}$ \\
        \midrule
        NSW & $814.62$ & $1153.92$ & $\underline{390.45}$ & $709.89$ & $556.17$ & $767.23$ \\
        NT & $135.97$ & $136.13$ & $177.71$ & $\underline{62.33}$ & $101.17$ & $90.27$ \\
        QLD & $633.93$ & $691.06$ & $581.55$ & $503.91$ & $\underline{264.39}$ & $267.44$ \\
        SA & $226.56$ & $291.26$ & $105.01$ & $98.52$ & $99.75$ & $\underline{94.40}$ \\
        TAS & $148.29$ & $148.68$ & $185.51$ & $86.92$ & $96.55$ & $\underline{78.14}$ \\
        VIC & $642.69$ & $938.29$ & $697.81$ & $553.19$ & $\underline{475.28}$ & $725.78$ \\
        WA & $362.15$ & $481.64$ & $513.88$ & $272.77$ & $\underline{233.58}$ & $567.47$ \\
        \midrule
        NSW\_ACT & $146.72$ & $146.72$ & $86.05$ & $\underline{79.23}$ & $83.41$ & $117.14$ \\
        NSW\_Nth & $158.32$ & $158.32$ & $110.01$ & $93.05$ & $99.64$ & $\underline{73.21}$ \\
        NSW\_Sth & $145.49$ & $145.49$ & $128.61$ & $\underline{90.52}$ & $104.94$ & $186.20$ \\
        NSW\_Metro & $370.82$ & $370.82$ & $192.91$ & $300.52$ & $\underline{178.58}$ & $365.35$ \\
        NSW\_Nth\_Coast & $265.32$ & $265.32$ & $\underline{145.86}$ & $193.64$ & $162.47$ & $222.41$ \\
        NSW\_Sth\_Coast & $131.95$ & $131.95$ & $165.04$ & $64.79$ & $\underline{54.05}$ & $122.21$ \\
        NT\_Central & $87.08$ & $87.08$ & $102.50$ & $48.88$ & $69.71$ & $\underline{41.73}$ \\
        NT\_Nth\_Coast & $58.57$ & $58.57$ & $81.15$ & $\underline{36.06}$ & $43.98$ & $81.02$ \\
        QLD\_Metro & $432.75$ & $432.75$ & $278.40$ & $389.27$ & $243.99$ & $\underline{207.95}$ \\
        QLD\_Central\_Coast & $62.53$ & $62.53$ & $78.83$ & $\underline{42.89}$ & $48.47$ & $64.68$ \\
        QLD\_Inland & $174.46$ & $174.46$ & $194.94$ & $131.45$ & $\underline{78.24}$ & $109.60$ \\
        QLD\_Nth\_Coast & $101.77$ & $101.77$ & $183.26$ & $73.57$ & $\underline{70.65}$ & $145.45$ \\
        SA\_Metro & $134.08$ & $134.08$ & $\underline{41.14}$ & $84.94$ & $45.04$ & $43.86$ \\
        SA\_Inland & $87.19$ & $87.19$ & $\underline{36.99}$ & $43.59$ & $42.36$ & $46.13$ \\
        SA\_West\_Coast & $31.63$ & $31.63$ & $33.36$ & $35.80$ & $\underline{21.76}$ & $31.26$ \\
        SA\_Sth\_Coast & $66.34$ & $66.34$ & $82.00$ & $\underline{42.47}$ & $46.66$ & $69.52$ \\
        TAS\_Nth\_East & $58.96$ & $58.96$ & $70.44$ & $\underline{35.35}$ & $39.60$ & $47.58$ \\
        TAS\_Sth & $61.49$ & $61.49$ & $84.42$ & $\underline{47.51}$ & $48.14$ & $55.55$ \\
        TAS\_Nth\_West & $44.88$ & $44.88$ & $42.50$ & $30.26$ & $33.01$ & $\underline{25.17}$ \\
        VIC\_Nth\_West & $124.89$ & $124.89$ & $86.11$ & $76.51$ & $\underline{74.90}$ & $81.43$ \\
        VIC\_Nth\_East & $232.47$ & $232.47$ & $108.39$ & $153.51$ & $116.29$ & $\underline{106.73}$ \\
        VIC\_Metro & $385.23$ & $385.23$ & $397.61$ & $307.86$ & $\underline{238.76}$ & $370.23$ \\
        VIC\_East\_Coast & $142.78$ & $142.78$ & $164.21$ & $\underline{87.36}$ & $97.22$ & $177.92$ \\
        VIC\_West\_Coast & $101.05$ & $101.05$ & $122.62$ & $\underline{43.97}$ & $50.62$ & $68.44$ \\
        WA\_West\_Coast & $275.32$ & $275.32$ & $364.37$ & $\underline{214.80}$ & $225.39$ & $446.61$ \\
        WA\_Sth & $101.13$ & $101.13$ & $63.93$ & $\underline{41.77}$ & $42.26$ & $79.92$ \\
        WA\_Nth & $135.48$ & $135.48$ & $134.23$ & $63.10$ & $\underline{48.77}$ & $66.16$ \\
        \bottomrule
    \end{tabular}
\end{table}
\begin{table}[htbp]
    \centering
    \caption{MAE for Walmart sales data}
    \label{tab:result-mae-walmart-full}
    \begin{tabular}{lrrrrrr}
        \toprule
         & Base & BU & TD & GLS & MinT & Robust \\
        \midrule
        total & $2343.98$ & $2659.41$ & $2343.98$ & $2370.30$ & $2264.97$ & $\underline{1898.06}$ \\
        \midrule
        CA & $1237.39$ & $1397.73$ & $\underline{473.01}$ & $1248.40$ & $1195.54$ & $993.95$ \\
        TX & $586.12$ & $587.64$ & $1364.16$ & $566.09$ & $529.02$ & $\underline{447.99}$ \\
        WI & $551.73$ & $682.22$ & $595.33$ & $563.94$ & $551.06$ & $\underline{460.93}$ \\
        \midrule
        CA\_1 & $217.28$ & $217.28$ & $509.66$ & $182.14$ & $196.40$ & $\underline{169.73}$ \\
        CA\_2 & $707.61$ & $707.61$ & $626.00$ & $670.27$ & $556.24$ & $\underline{426.06}$ \\
        CA\_3 & $205.31$ & $205.31$ & $371.79$ & $\underline{167.98}$ & $193.55$ & $174.84$ \\
        CA\_4 & $268.06$ & $268.06$ & $\underline{153.50}$ & $230.73$ & $250.44$ & $224.32$ \\
        TX\_1 & $139.57$ & $139.57$ & $496.24$ & $132.39$ & $122.58$ & $\underline{95.30}$ \\
        TX\_2 & $210.50$ & $210.50$ & $505.23$ & $203.52$ & $190.62$ & $\underline{161.45}$ \\
        TX\_3 & $239.12$ & $239.12$ & $362.70$ & $232.15$ & $216.86$ & $\underline{193.64}$ \\
        WI\_1 & $266.21$ & $266.21$ & $\underline{60.84}$ & $226.78$ & $197.83$ & $129.05$ \\
        WI\_2 & $178.76$ & $178.76$ & $171.12$ & $139.94$ & $\underline{135.31}$ & $138.43$ \\
        WI\_3 & $238.70$ & $238.70$ & $456.46$ & $199.22$ & $220.13$ & $\underline{195.16}$ \\
        \bottomrule
    \end{tabular}
\end{table}
\begin{table}[htbp]
    \centering
    \caption{RMSE for Walmart sales data}
    \label{tab:result-rmse-walmart-full}
    \begin{tabular}{lrrrrrr}
        \toprule
         & Base & BU & TD & GLS & MinT & Robust \\
        \midrule
        total & $2479.68$ & $2799.50$ & $2479.68$ & $2507.51$ & $2398.58$ & $\underline{2026.11}$ \\
        \midrule
        CA & $1315.76$ & $1466.86$ & $\underline{502.11}$ & $1323.84$ & $1267.95$ & $1076.92$ \\
        TX & $613.93$ & $615.60$ & $1386.53$ & $592.66$ & $553.26$ & $\underline{469.26}$ \\
        WI & $586.42$ & $724.49$ & $635.82$ & $599.93$ & $586.67$ & $\underline{491.14}$ \\
        \midrule
        CA\_1 & $228.58$ & $228.58$ & $516.72$ & $191.94$ & $206.43$ & $\underline{178.76}$ \\
        CA\_2 & $761.58$ & $761.58$ & $630.90$ & $727.71$ & $623.03$ & $\underline{518.46}$ \\
        CA\_3 & $212.33$ & $212.33$ & $377.04$ & $\underline{174.67}$ & $200.11$ & $180.73$ \\
        CA\_4 & $278.59$ & $278.59$ & $\underline{163.45}$ & $241.39$ & $260.40$ & $233.43$ \\
        TX\_1 & $147.41$ & $147.41$ & $503.43$ & $139.73$ & $129.54$ & $\underline{101.01}$ \\
        TX\_2 & $219.32$ & $219.32$ & $511.96$ & $211.67$ & $198.63$ & $\underline{168.66}$ \\
        TX\_3 & $250.99$ & $250.99$ & $371.73$ & $243.43$ & $227.41$ & $\underline{202.47}$ \\
        WI\_1 & $283.85$ & $283.85$ & $\underline{90.82}$ & $241.77$ & $211.29$ & $136.84$ \\
        WI\_2 & $191.39$ & $191.39$ & $183.70$ & $151.72$ & $\underline{147.07}$ & $151.58$ \\
        WI\_3 & $252.62$ & $252.62$ & $464.93$ & $211.08$ & $233.04$ & $\underline{207.25}$ \\
        \bottomrule
    \end{tabular}
\end{table}
\begin{table}[htbp]
    \centering
    \caption{MAE for Swiss electricity demand data}
    \label{tab:result-mae-electricity-full}
    \begin{tabular}{lrrrrrr}
        \toprule
         & Base & BU & TD & GLS & MinT & Robust \\
        \midrule
        grid & $39.471$ & $41.091$ & $39.471$ & $40.343$ & $43.814$ & $\underline{35.068}$ \\
        \midrule
        S1 & $18.902$ & $19.298$ & $24.210$ & $\underline{18.845}$ & $19.039$ & $18.977$ \\
        S2 & $32.668$ & $34.938$ & $22.365$ & $31.639$ & $32.098$ & $\underline{19.997}$ \\
        \midrule
        S11 & $7.783$ & $8.606$ & $16.944$ & $7.880$ & $7.647$ & $\underline{7.629}$ \\
        S12 & $11.358$ & $11.356$ & $14.279$ & $11.362$ & $11.437$ & $\underline{11.348}$ \\
        S21 & $24.839$ & $25.529$ & $22.372$ & $24.545$ & $22.737$ & $\underline{12.900}$ \\
        S22 & $8.388$ & $9.680$ & $10.263$ & $8.347$ & $9.649$ & $\underline{8.115}$ \\
        \midrule
        S11\_1 & $1.087$ & $1.087$ & $1.864$ & $0.783$ & $\underline{0.478}$ & $0.489$ \\
        S11\_2 & $2.265$ & $2.265$ & $4.394$ & $2.627$ & $2.666$ & $\underline{2.000}$ \\
        S11\_3 & $\underline{1.710}$ & $\underline{1.710}$ & $2.321$ & $2.078$ & $2.142$ & $1.877$ \\
        S11\_4 & $3.008$ & $3.008$ & $3.335$ & $2.968$ & $\underline{2.909}$ & $2.994$ \\
        S11\_5 & $6.053$ & $6.053$ & $15.819$ & $5.771$ & $5.806$ & $\underline{5.135}$ \\
        S11\_6 & $2.619$ & $2.619$ & $3.602$ & $2.724$ & $2.753$ & $\underline{2.616}$ \\
        S12\_1 & $\underline{4.558}$ & $\underline{4.558}$ & $8.822$ & $4.563$ & $4.760$ & $5.088$ \\
        S12\_2 & $1.758$ & $1.758$ & $\underline{1.667}$ & $1.756$ & $2.023$ & $2.144$ \\
        S12\_3 & $4.519$ & $4.519$ & $7.814$ & $\underline{4.515}$ & $5.325$ & $4.572$ \\
        S12\_4 & $4.263$ & $4.263$ & $6.355$ & $4.268$ & $4.233$ & $\underline{3.121}$ \\
        S12\_5 & $4.499$ & $4.499$ & $4.249$ & $4.504$ & $\underline{4.169}$ & $5.501$ \\
        S12\_6 & $2.505$ & $2.505$ & $\underline{2.161}$ & $2.502$ & $2.884$ & $2.610$ \\
        S21\_1 & $1.359$ & $1.359$ & $2.935$ & $1.470$ & $\underline{1.174}$ & $1.490$ \\
        S21\_2 & $1.124$ & $1.124$ & $1.135$ & $1.180$ & $\underline{1.106}$ & $1.297$ \\
        S21\_3 & $0.872$ & $0.872$ & $2.793$ & $0.848$ & $\underline{0.841}$ & $1.385$ \\
        S21\_4 & $2.751$ & $2.751$ & $10.583$ & $2.812$ & $\underline{2.745}$ & $3.352$ \\
        S21\_5 & $6.562$ & $6.562$ & $\underline{4.230}$ & $6.398$ & $7.161$ & $5.460$ \\
        S21\_6 & $17.541$ & $17.541$ & $18.843$ & $17.377$ & $13.768$ & $\underline{9.224}$ \\
        S22\_1 & $2.173$ & $2.173$ & $5.764$ & $\underline{1.866}$ & $2.493$ & $2.296$ \\
        S22\_2 & $1.928$ & $1.928$ & $7.793$ & $\underline{1.687}$ & $2.118$ & $2.091$ \\
        S22\_3 & $3.880$ & $3.880$ & $8.993$ & $3.711$ & $\underline{3.702}$ & $4.317$ \\
        S22\_4 & $1.100$ & $1.100$ & $1.983$ & $1.198$ & $\underline{1.093}$ & $1.101$ \\
        S22\_5 & $1.225$ & $1.225$ & $1.180$ & $1.088$ & $1.266$ & $\underline{1.066}$ \\
        S22\_6 & $1.168$ & $1.168$ & $1.998$ & $\underline{0.994}$ & $1.581$ & $2.003$ \\
        \bottomrule
    \end{tabular}
\end{table}
\begin{table}[htbp]
    \centering
    \caption{RMSE for Swiss electricity demand data}
    \label{tab:result-rmse-electricity-full}
    \begin{tabular}{lrrrrrr}
        \toprule
         & Base & BU & TD & GLS & MinT & Robust \\
        \midrule
        grid & $64.301$ & $65.685$ & $64.301$ & $65.102$ & $67.498$ & $\underline{60.821}$ \\
        \midrule
        S1 & $30.940$ & $\underline{30.341}$ & $34.684$ & $30.683$ & $31.597$ & $31.432$ \\
        S2 & $39.665$ & $41.439$ & $32.503$ & $38.863$ & $39.198$ & $\underline{30.729}$ \\
        \midrule
        S11 & $12.023$ & $11.953$ & $20.456$ & $\underline{11.951}$ & $12.321$ & $12.442$ \\
        S12 & $19.037$ & $18.966$ & $\underline{17.871}$ & $18.976$ & $19.450$ & $19.160$ \\
        S21 & $28.347$ & $28.919$ & $27.276$ & $28.105$ & $26.616$ & $\underline{19.665}$ \\
        S22 & $11.817$ & $13.200$ & $12.733$ & $11.781$ & $13.164$ & $\underline{11.596}$ \\
        \midrule
        S11\_1 & $1.166$ & $1.166$ & $2.088$ & $0.877$ & $\underline{0.649}$ & $0.656$ \\
        S11\_2 & $2.800$ & $2.800$ & $5.079$ & $3.117$ & $3.152$ & $\underline{2.526}$ \\
        S11\_3 & $\underline{1.968}$ & $\underline{1.968}$ & $2.775$ & $2.304$ & $2.365$ & $2.111$ \\
        S11\_4 & $3.466$ & $3.466$ & $4.334$ & $\underline{3.409}$ & $3.596$ & $3.862$ \\
        S11\_5 & $6.399$ & $6.399$ & $16.354$ & $6.105$ & $6.142$ & $\underline{5.449}$ \\
        S11\_6 & $4.682$ & $4.682$ & $5.583$ & $4.823$ & $4.850$ & $\underline{4.666}$ \\
        S12\_1 & $6.568$ & $6.568$ & $12.222$ & $6.571$ & $\underline{6.566}$ & $6.621$ \\
        S12\_2 & $2.350$ & $2.350$ & $\underline{2.105}$ & $2.348$ & $2.586$ & $2.688$ \\
        S12\_3 & $5.037$ & $5.037$ & $8.329$ & $\underline{5.034}$ & $5.782$ & $5.077$ \\
        S12\_4 & $4.726$ & $4.726$ & $7.384$ & $4.730$ & $4.707$ & $\underline{4.183}$ \\
        S12\_5 & $5.247$ & $5.247$ & $5.082$ & $5.252$ & $\underline{5.062}$ & $6.016$ \\
        S12\_6 & $2.744$ & $2.744$ & $2.989$ & $\underline{2.741}$ & $3.099$ & $2.844$ \\
        S21\_1 & $1.739$ & $1.739$ & $3.783$ & $1.800$ & $\underline{1.655}$ & $1.811$ \\
        S21\_2 & $1.566$ & $1.566$ & $1.614$ & $1.592$ & $\underline{1.560}$ & $1.648$ \\
        S21\_3 & $1.022$ & $1.022$ & $3.715$ & $0.996$ & $\underline{0.993}$ & $1.625$ \\
        S21\_4 & $4.406$ & $4.406$ & $11.238$ & $4.360$ & $4.415$ & $\underline{4.228}$ \\
        S21\_5 & $7.181$ & $7.181$ & $\underline{5.095}$ & $7.034$ & $7.725$ & $6.216$ \\
        S21\_6 & $18.548$ & $18.548$ & $19.619$ & $18.400$ & $15.199$ & $\underline{11.585}$ \\
        S22\_1 & $2.559$ & $2.559$ & $6.203$ & $\underline{2.256}$ & $2.858$ & $2.672$ \\
        S22\_2 & $2.339$ & $2.339$ & $8.260$ & $\underline{2.046}$ & $2.590$ & $2.557$ \\
        S22\_3 & $6.345$ & $6.345$ & $10.247$ & $6.159$ & $5.731$ & $\underline{5.548}$ \\
        S22\_4 & $1.326$ & $1.326$ & $2.465$ & $1.443$ & $\underline{1.310}$ & $1.320$ \\
        S22\_5 & $1.717$ & $1.717$ & $1.497$ & $1.522$ & $1.758$ & $\underline{1.484}$ \\
        S22\_6 & $1.424$ & $1.424$ & $2.417$ & $\underline{1.226}$ & $1.838$ & $2.248$ \\
        \bottomrule
    \end{tabular}
\end{table}

\end{document}